\documentclass{article}

\usepackage[preprint]{neurips_2026}

\usepackage[utf8]{inputenc} 
\usepackage[T1]{fontenc}    
\usepackage{hyperref}       
\usepackage{url}            
\usepackage{booktabs}       
\usepackage{amsfonts}       
\usepackage{amsmath}
\usepackage{nicefrac}       
\usepackage{microtype}      
\usepackage{xcolor}         

\usepackage{graphicx}
\usepackage{pgfplots}
\pgfplotsset{width=6cm,
    every non boxed x axis/.append style={x axis line style=-},
    every non boxed y axis/.append style={y axis line style=-},
    every axis x label/.style={at={(current axis.right of origin)},anchor=west},
    every axis y label/.style={at={(current axis.above origin)},anchor=south}}

\definecolor{BURed}{HTML}{CC0000}
\definecolor{BUBlack}{HTML}{2D2926}
\definecolor{AI2Green}{HTML}{105257}
\definecolor{AI2Pink}{HTML}{F0529C}

\DeclareMathOperator*{\argmax}{argmax}
\DeclareMathOperator{\train}{train}
\DeclareMathOperator{\mrts}{MRTS}

\usepackage{amsmath}
\usepackage{amsthm}
\usepackage{amssymb}
\usepackage{url}
\usepackage{xcolor}
\usepackage{cleveref}
\usepackage{csquotes}
\usepackage{paralist}
\usepackage{thm-restate}

\usepackage{cancel}

\newtheorem{lemma}{Lemma}
\newtheorem{theorem}{Theorem}

\theoremstyle{definition}

\newcommand\dc[3]{\textsf{\color{#2} [#1: #3]}}

\newcommand\WM[1]{\dc{WM}{orange}{#1}}

\renewcommand\dc[3]{}  

\title{A Theory of Training Profit-Optimal LLMs}

\author{%
  Sophie Hao$^*$ \\
  Boston University \\
  Boston, MA, USA \\
  \texttt{uu@bu.edu} \\
  \And
  William Merrill\thanks{Equal contribution.} \\
  Allen Institute for AI \\
  Seattle, WA, USA \\
  \texttt{willm@allenai.org} \\
}

\begin{document}

\maketitle

\begin{abstract}
    Scaling large language models (LLMs) requires tremendous computational resources, and recent advances in AI have gone hand in hand with massive amounts of capital expenditure. While it is established that scaling up LLMs reliably increases model quality (quantified in terms of loss or downstream evaluations), it is unclear how these quality improvements translate to potential revenue, and whether revenue increases would offset costs of larger-scale training and inference. In this work, we develop an economic model for characterizing the rational behavior of an LLM training firm by combining scaling laws with microeconomic theory. Under our model, LLM quality can be increased with more parameters and training tokens, leading to more potential adoption by consumers, who each have a quality threshold for using the LLM. On the other hand, additional parameters and training tokens both incur additional costs. We analyze the profit maximization problem for this model under compute-bound and data-bound regimes. In the compute-bound regime, optimal model size and token budget track hardware efficiency $E$ (FLOPs/\$) at a near-linear rate; total training cost then scales sub-quadratically in $E$. Data efficiency improvements incentivize larger models and training expenditure. When we are limited to $D$ data, profit-optimal training expenditure scales as $D^2 / E$, i.e, increase with data and \emph{decreases} with hardware efficiency (as well as data efficiency). Finally, we analyze practical trends in training expenditure: current trends are consistent with our most permissive model variants in the compute-bound regime, but are not profit-optimal in the data-bound regime or assuming hardware advances will stall. Overall, our results provide a theory of profit-optimal LLM training, providing a foundation for engaging critically with industry statements and supporting long-term economic decision making.
\end{abstract}

\section{Introduction}

At the time of writing, tremendous capital expenditure has gone towards training LLMs.
At a high level, this bet is motivated by the empirical phenomenon of \emph{scaling laws}: making LLMs larger and training them on more data monotonically improves their quality, measured in terms of training loss \citep{kaplanScalingLawsNeural2020,hoffmann2022training}.
Higher LLM quality is associated with better performance on downstream tasks \citep{wei2022emergent}.
This suggests scaling up LLM training could lead to models that are useful for many potential users, and thus profitable for their trainers.

On the other hand, improving LLM quality by scaling them up makes them more expensive, both in terms of raw computation (``compute'', measured in FLOPs) and dollars. The compute required to train an LLM is proportional to both the parameter count $n$ and training data budget $d$, so scaling up both of these increases compute quadratically.
Inference compute scales with $n$ but not $d$.
As $n$ and $d$ have jumped orders of magnitude, the compute required for training and inference have also increased exponentially.
While quality improvements are monotonic in these variables, they are also \emph{diminishing}, leading to the question of whether doubling down on making training more expensive to chase vanishing quality improvements will make LLM training profitable in the long run.

\begin{figure}
    \centering
    \footnotesize
    \begin{tikzpicture}
        \begin{axis}[
            axis lines = left,
            xmajorticks = false,
            ymajorticks = false,
            x = .01cm,
            y = .00325cm,
            xlabel = {Hardware Efficiency\\(FLOPs/\$)},
            ylabel = {LLM Size\\(params)},
            xlabel style={align=center,at={(axis description cs:0.5,-0.1)},anchor=north},
            ylabel style={align=center,at={(axis description cs:-0.1,.5)},rotate=90,anchor=south},
            ymax = 615,
            xmax = 196,
            ymin = 0,
            xmin = 0
        ] 
        \addplot [thick, BURed] table [col sep=comma] {neurips-images/e_vs_n.csv}; \end{axis} 
    \end{tikzpicture}
    \begin{tikzpicture}
        \begin{axis}[
            axis lines = left,
            xmajorticks = false,
            ymajorticks = false,
            x = .01cm,
            y = 0.000173cm,
            xlabel = {Hardware Efficiency\\(FLOPs/\$)},
            ylabel = {Train Expenditure\\(\$)},
            xlabel style={align=center,at={(axis description cs:0.5,-0.1)},anchor=north},
            ylabel style={align=center,at={(axis description cs:-0.1,.5)},rotate=90,anchor=south},
            ymax = 11577,
            xmax = 196,
            ymin = 0,
            xmin = 0
        ] 
        \addplot [thick, BURed] table [col sep=comma] {neurips-images/e_vs_c.csv}; \end{axis} 
    \end{tikzpicture}
    \begin{tikzpicture}
        \begin{axis}[
            axis lines = left,
            xmajorticks = false,
            ymajorticks = false,
            x = .1cm,
            y = .0556cm,
            xlabel = {Parameter Efficiency\\(LLM quality/param)},
            ylabel = {LLM Size\\(params)},
            xlabel style={align=center,at={(axis description cs:0.5,-0.1)},anchor=north},
            ylabel style={align=center,at={(axis description cs:-0.1,.5)},rotate=90,anchor=south},
            ymax = 36,
            xmax = 20,
            ymin = 0,
            xmin = 0
        ] 
        \addplot [thick, BURed] table [col sep=comma] {neurips-images/a_vs_n.csv}; \end{axis} 
    \end{tikzpicture}
    \begin{tikzpicture}
        \begin{axis}[
            axis lines = left,
            xmajorticks = false,
            ymajorticks = false,
            x = .1cm,
            y = .000263cm,
            xlabel = {Parameter Efficiency\\(LLM quality/param)},
            ylabel = {Train Expenditure\\(\$)},
            xlabel style={align=center,at={(axis description cs:0.5,-0.1)},anchor=north},
            ylabel style={align=center,at={(axis description cs:-0.1,.5)},rotate=90,anchor=south},
            ymax = 7613,
            xmax = 20,
            ymin = 0,
            xmin = 0
        ] 
        \addplot [thick, BURed] table [col sep=comma] {neurips-images/a_vs_c.csv}; \end{axis} 
    \end{tikzpicture}
    \caption{
    Our model predicts that a profit-maximizing LLM firm scales its LLM training expenditures sublinearly with hardware efficiency and inversely with parameter efficiency. Plots are shown for $\gamma = -1$ (to be defined in \autoref{sec:consumer-behavior}).}
    \label{fig:plots}
\end{figure}
In this paper, we address this open question by developing a theory of \emph{profit-optimal} training behavior for LLM firms.
Under our model, the firm chooses their model size $n$ and training data budget $d$.
Larger choices of $n$ and $d$ produce higher-quality LLMs, which increases demand for the LLM, allowing the firm to charge more per inference token.
On the other hand, larger choices of $n$ and $d$ incur additional training and inference costs, suggesting that there is some  \emph{profit-optimal} $n^*, d^*$, i.e., a choice of these variables that maximizes profit.
We characterize $n^*, d^*$ as a function of exogenous variables like hardware efficiency (FLOPs/\$), the parameter and data efficiency of LLM training methods, and various natural constants.

Our first contribution is to formalize the profit maximization problem for an LLM firm with a monopoly on the market (\autoref{sec:formalization}).
LLM quality as well as training and inference costs can be measured relatively uncontroversially; the major challenge here is defining precisely how LLM quality improvements affect demand and token price.
We formalize this in a general way where each consumer has a minimum quality threshold that makes the LLM useful to them (e.g., it sufficiently solves all tasks relevant to their domain).
Then, the relationship between quality and demand reduces to a question about the distribution of this quality threshold across consumers.
We make the general assumption that it follows a power law $\propto 1 - q^{-\gamma}$, where the exponent $\gamma$ controls the degree to which inverse demand is diminishing in quality.
Following standard ideas in economics \citep{acemogluSimpleMacroeconomicsAI2025}, we assume inverse demand is diminishing or at most linear in quality, i.e., $\gamma < -1$.
This weak assumption suffices to prove our main results.

Having formalized the profit maximization problem, \autoref{sec:compute-bound} characterizes profit-optimal LLM training in the \emph{compute-bound} regime, i.e., the setting where the LLM firm is limited by training and inference costs but not by the amount of data available.
We find that optimal model size $n^*$ and data budget $d^*$ increase with hardware efficiency $E$ at a near-linear rate of $E^{1/(1+\alpha\gamma)}$, where $\alpha$ is the Chinchilla parameter exponent \citep{hoffmann2022training}.
thus, overall training compute $C^*_{\train}$ scales with $E$ at a rate of $E^{\frac{1-\alpha\gamma}{1+\alpha\gamma}}$.
We also consider the role of LLM training methodology: depending on the sign of $\gamma$, improvements in parameter efficiency can either increase or decrease $C^*_{\train}$.
In contrast, data efficiency improvements reduce $n^*$ but keep $C^*_{\train}$ fixed.

Additionally, in \autoref{sec:data-bound}, we characterize profit-optimal scaling in the data-bound regime where a fixed maximum data budget $D$ is prescribed.
Here, $n^*$ scales with $D$ at a near-linear rate, independent of $E$.
Notably, training expenditure $C^*_{\train}$ increases roughly quadratically with $D$, but \emph{decreases} with $E$.
Both $n^*$ and $C^*_{\train}$ improve with data efficiency advances but decrease with parameter efficiency advances in the data-bound regime.

Finally, we consider in \autoref{sec:empirics} how the characterization of profit-optimal LLM training under our model matches the empirical growth trends in training expenditure and related variables.
With $\gamma = 0$, which we take as a weak prior, we find that current training expenditure exceeds what is profit-optimal in the compute-bounded regime.
Solving for the value of $\gamma$ that would make current trends profit-optimal, we find $\hat \gamma \approx -0.77$, meaning that inverse demand is barely diminishing in LLM quality.
Thus, there is some version of our model where current trends are consistent with profit-optimal training behavior in the compute-bounded regime.

Overall, we extend the compute-optimal training framework \citep{hoffmann2022training} to model profit-optimal LLM training; we also comprehensively characterize profit-optimal training in the compute-bound and data-bound regime.
We hope our results can provide a rigorous framework for forecasting future developments in LLM training and critically engaging with industry trends; to this end, we include a comprehensive discussion of our results' implications, underlying assumptions, and rectification with other narratives on the profitability of LLM scaling (\autoref{sec:discussion}).

\section{Setup: LLM Firms and Profit Maximization} \label{sec:formalization}


In microeconomics, \textit{firms} are entities that take \textit{input factors} and produce an \textit{output good} that is sold to consumers. A pizzeria, for example, is a firm whose input factors are pizza ingredients, rent, and labor, and whose output good is pizza. The \textit{theory of the firm} aims to describe the behavior of firms in terms of the quantity of inputs they consume and outputs they produce, assuming that each firm maximizes profit within a market that may or may not be competitive.\footnote{See \citet{varianIntermediateMicroeconomicsModern2024} for an overview of the relevant microeconomic theory for this paper.}

In this section, we develop a microeconomic model that describes the behavior of a firm that trains an LLM and uses it to run an AI chatbot service. The firm's inputs consist of \textit{training data} and \textit{compute}, and its output consists of \textit{tokens} that are sold to consumers. Our goal is to make predictions about the relation between expectations of increased compute efficiency and the firm's investment into scaling.

Our model has the following characteristics. Because the focus of this paper is on LLM scaling and not on the effects of competition, we assume that the LLM firm operates with monopoly power. 
The LLM firm maximizes profit by deciding how many tokens to produce and sell, subject to technological constraints and consumer demand. Additionally, the LLM firm decides how much data and compute it will invest into training the LLM. A greater investment of input factors endows the LLM firm with a higher \textit{quality} LLM, which in turn increases demand for the LLM's tokens. 

\subsection{Consumer Behavior} \label{sec:consumer-behavior}

\begin{figure}
    \centering
    \begin{tikzpicture}
        \begin{axis}[
            axis lines = left,
            xmajorticks = false,
            x = .2cm,
            y = .2cm,
            xlabel = {\footnotesize Tokens},
            ylabel = {\footnotesize Price (\$/token)},
            ytick = {5.5, 8.5},
            yticklabels = {$\omega f(q)$, $\omega f(q + \Delta q)$},
            ymax = 10,
            xmax = 10,
            ymin = 0,
            xmin = 0
        ]
            \addplot[color=BURed, domain=-10:10, thick]{8.5 - x};
            \addplot[color=BUBlack, domain=-10:10, thick]{5.5 - x};
        \end{axis}
    \end{tikzpicture}
    \caption{Inverse demand functions for tokens generated by an LLM of quality $q$ (\textcolor{BUBlack}{\textbf{black}}) and $q + \Delta q$ (\textcolor{BURed}{\textbf{red}}), where $\Delta q > 0$. For any particular level of LLM quality $q$, demand for tokens is linear, with $\omega\ln(q)$ being the highest possible price that a token could be sold for. Training a better model increases the demand for tokens from that model.}
    \label{fig:demand-tokens}
\end{figure}

LLMs are general-purpose, open-ended AI models, which can be applied to a potentially unlimited range of \textit{tasks}. LLMs of higher \textit{quality}, as measured by inverse next-token prediction loss, have been shown to achieve better performance \citep{kaplanScalingLawsNeural2020,srivastava2023beyond} on a wider range of tasks \citep{brownLanguageModelsAre2020a,wei2022emergent}. Accordingly, AI chatbot services are typically priced by the inference token, with higher prices charged for tokens generated by a higher-quality LLM.

In the theory of the firm, consumer behavior is described by an \textit{inverse demand function} that predicts the unit price $p$ at which a good is sold from the quantity $t$ of the good that is sold. Following the \textit{law of demand}, we assume that $p$ decreases with respect to $t$.
Additionally, since higher-quality LLMs can be applied to a wider range of tasks, we assume that $p$ increases with respect to some measure of LLM quality $q$, which currently is left generic.
We capture both of these dependencies by proposing the following \textit{quasilinear} inverse demand function:
\[
p(t, q) = \omega f(q) - \delta t ,
\]
where $f(q)$ is some \emph{linking function} from quality to inverse demand that must be strictly monotonic and differentiable with respect to $q$.
Much of our analysis will apply to any choice of $f$ satisfying these properties, but three natural options are $f(q) = \ln(q)$, $f(q) = f_\gamma(q)$ with $\gamma > 0$, and $f(q) = f_\gamma(q)$ with $\gamma < 0$, where
\[
f_\gamma(q) = \frac{1}{\gamma} \left( 1 - q^{-\gamma} \right).
\]
All of these satisfy the axioms above and also yield diminishing returns to increasing model quality as long as $\gamma > -1$. Setting $\gamma = -1$ makes inverse demand linear in quality. Furthermore, we will see that they all can be motivated as special cases of the same general framework.

\begin{figure}
    \centering
    \footnotesize
    \begin{tabular}{c c c}
        \begin{tikzpicture}
            \begin{axis}[
                axis lines = left,
                xmajorticks = false,
                ymajorticks = false,
                x = .167cm,
                y = .167cm,
                xlabel = {\footnotesize $q$},
                ylabel = {\footnotesize $f_{-1}(q)$},
                ymax = 12,
                xmax = 12,
                ymin = 0,
                xmin = 0
            ]
                \addplot[domain=1:12, thick, BURed]{x - 1};
            \end{axis}
        \end{tikzpicture}
        & 
        \begin{tikzpicture}
            \begin{axis}[
                axis lines = left,
                xmajorticks = false,
                ymajorticks = false,
                x = .167cm,
                y = .167cm,
                xlabel = {\footnotesize $q$},
                ylabel = {\footnotesize $f_{0}(q)$},
                ymax = 12,
                xmax = 12,
                ymin = 0,
                xmin = 0
            ]
                \addplot[domain=1:12, thick, BURed]{ln(x)};
            \end{axis}
        \end{tikzpicture}
        & 
        \begin{tikzpicture}
            \begin{axis}[
                axis lines = left,
                xmajorticks = false,
                ymajorticks = false,
                x = .167cm,
                y = .167cm,
                xlabel = {\footnotesize $q$},
                ylabel = {\footnotesize $f_{1}(q)$},
                ymax = 12,
                xmax = 12,
                ymin = 0,
                xmin = 0
            ]
                \addplot[domain=0:10, thick, BURed]{1 - 1/x};
            \end{axis}
        \end{tikzpicture}
    \end{tabular}
    \caption{Inverse demand linking functions are parameterized by $\gamma$, which controls the degree to which the demand for tokens generated by LLM quality experiences diminishing returns to scale.}
\end{figure}
\paragraph{Derivation of Linking Functions.} We derive the general form of $f_\gamma(q)$ from the following assumptions, somewhat similar to the \textit{quantization model} framework of \citet{michaud2023quantization}:
\begin{enumerate}
    \item Each potential consumer of the LLM has a \emph{reservation quality} $q_* > 0$ such that they will pay for the LLM if and only if its quality $q \geq q_*$.
    \item This implies a \emph{density} over reservation qualities giving the rate at which tasks are unlocked as quality increases. This density follows a power law $1/q_*^{1+\gamma}$, for some $-1 < \gamma$.
    \item The maximum price that can be charged for an LLM token is proportional to number of consumers willing to buy that token; i.e., the number of consumers for whom $q \geq q_*$.
\end{enumerate}
Writing $p(t, q) = \omega f_\gamma(q) - \delta t$, we have
\[
f_\gamma(q) = \max_{t \geq 0} \frac{p(t, q)}{\omega} = \int_0^q \frac{1}{q_*^{1 + \gamma}} \, \mathrm{d}q_* .
\]
We show in \autoref{sec:linking-function} that this formula recovers $f_\gamma(q)$ in the way it was defined above.

\paragraph{Related Approaches.} Our inverse demand function is related to several prominent accounts of the relationship between quality and demand, as applied to LLMs. \citet{spenceMonopolyQualityRegulation1975} and \citet{mussaMonopolyProductQuality1978} assume that quality increases demand for a particular good or service. \citeauthor{michaud2023quantization}'s (\citeyear{michaud2023quantization}) \textit{quantization model}, as well as the economic models of \citet{acemogluSimpleMacroeconomicsAI2025} and \citet{bergemannMenuPricingLarge2026}, ground LLM quality in the number of tasks that the LLM can solve.


\subsection{LLM Scaling}
\begin{figure}
    \centering
    \footnotesize
    \begin{tabular}{c c c c}
        \textbf{Leontief} & \textbf{Chinchilla} 
        & \textbf{Perfect Substitutes} \\
        $\sigma = 0$ & $\sigma = .76$ 
        & $\sigma = \infty$\\
        \begin{tikzpicture}
            \begin{axis}[
                axis lines = left,
                xmajorticks = false,
                ymajorticks = false,
                x = .167cm,
                y = .167cm,
                xlabel = {\footnotesize $n$},
                ylabel = {\footnotesize $d$},
                ymax = 12,
                xmax = 12,
                ymin = 0,
                xmin = 0
            ]
                \draw[thick, BURed] (axis cs:5,12) -- (axis cs:5,5) -- (axis cs:12,5);
            \end{axis}
        \end{tikzpicture}
        & 
        \begin{tikzpicture}
            \begin{axis}[
                axis lines = left,
                xmajorticks = false,
                ymajorticks = false,
                x = .167cm,
                y = .167cm,
                xlabel = {\footnotesize $n$},
                ylabel = {\footnotesize $d$},
                ymax = 12,
                xmax = 12,
                ymin = 0,
                xmin = 0
            ]
                \addplot[domain=1:12, thick, BURed]{1 / (1.21037 - 1 / (x^.31205))^3.2046};
            \end{axis}
        \end{tikzpicture}
        & 
        \begin{tikzpicture}
            \begin{axis}[
                axis lines = left,
                xmajorticks = false,
                ymajorticks = false,
                x = .167cm,
                y = .167cm,
                xlabel = {\footnotesize $n$},
                ylabel = {\footnotesize $d$},
                ymax = 12,
                xmax = 12,
                ymin = 0,
                xmin = 0
            ]
                \addplot[domain=0:10, thick, BURed]{10 - x};
            \end{axis}
        \end{tikzpicture}
    \end{tabular}
    \caption[A scaling law's elasticity of substitution $\sigma$ measures the curvature of its \textit{iso-quality curves}.]{A scaling law's elasticity of substitution $\sigma$ measures the curvature of its \textit{iso-quality curves}, where $q(n, d)$ is constant. When $\sigma < 1$, we say that $n$ and $d$ are \textit{complements}, and quality is optimized when $n$ and $d$ are scaled together. When $\sigma \geq 1$, $n$ and $d$ are \textit{substitutes}, meaning they can be exchanged for one another without sacrificing quality.}
    \label{fig:elasticity}
\end{figure}


Research on LLM scaling has shown that an LLM's loss $\ell$ on next-token prediction is determined by the LLM's \textit{model size} $n$, in number of trainable parameters, and \textit{training data size} $d$, in tokens \citep{kaplanScalingLawsNeural2020,hoffmann2022training}, where higher-quality models generally have a lower loss. 
\citet{hoffmann2022training} in particular show that model size and training data size are \textit{complements}: scaling both $n$ and $d$ results in the greatest possible increase in LLM quality. To capture this relation, we propose the \textit{Leontief scaling law with diminishing returns to scale} for model quality as a function of $n$ and $d$:
\[
q(n, d) = \min\lbrace an^\alpha, bd^\beta \rbrace
\]
where the parameters $a > 0$ and $b > 0$ represent \textit{parameter efficiency} and \textit{data efficiency}, respectively, and $0 < \alpha \leq 1$ and $0 < \beta \leq 1$. 

The Leontief scaling law is based on descriptions of production technologies where input factors are \textit{perfect complements}; i.e., no increase in production is obtained unless all input factors are scaled simultaneously \citep{leontiefStructureAmericanEconomy1941}. The degree to which input factors must be scaled together is measured by a scaling law's \textit{elasticity of substitution}, $\sigma$. As shown in \autoref{fig:elasticity}, $\sigma$ measures the the degree to which $n$ and $d$ can be substituted with one another without sacrificing quality. $n$ and $d$ are \textit{substitutes} when $\sigma \geq 1$ and complements when $\sigma < 1$. 
In \autoref{sec:elasticity} we show that \citeauthor{hoffmann2022training}'s (\citeyear{hoffmann2022training}) \textit{Chinchilla scaling law} has an elasticity of substitution of $\sigma \approx .76$, which makes $n$ and $d$ complements. The Leontief scaling law simplifies our analysis by idealizing this relationship.

\subsection{Profit Maximization Problem}

The LLM firm faces a \textit{profit maximization problem} given by:
\[
n^*, d^*, t^* = \argmax_{n, d, t} \pi(n, d, t) \mathrel{\text{subject to}} n \geq 0, d \geq 0, t \geq 0
\]
where $\pi(n, d, t)$ is the profit earned by selling $t$ tokens generated by an LLM of size $n$ trained on $d$ tokens of data. Following microeconomic theory, we assume that the firm chooses values of $n$, $d$, and $t$ that maximize profit. The solution to the profit maximization problem therefore gives a complete description of the LLM firm's behavior.

The LLM firm's profit $\pi$ is given by its \textit{revenue} $R$ minus its \textit{cost} $C$. The LLM firm's revenue is the amount of money it earns by selling $t$ tokens generated by an LLM of quality $q$ given by the Leontief scaling law, at the price of \$$p$/token given by the inverse demand function.
\[
R(n, d, t) = p(t, q(n, d))t = \omega t \cdot f(q(n, d)) - \delta t^2
\]

The LLM firm's cost is that of purchasing $c_{\train} + c_{\inf}$ FLOPs of training and inference compute, respectively, at a price of \$$p_c$/FLOP. We assume without loss of generality that training data are free. The price of compute is determined by $p_c = 1/E$, where $E$ is the equilibrium \textit{hardware efficiency} supplied by the market. We follow \citet{kaplanScalingLawsNeural2020} in assuming that $c_{\train} = 6nd$ and $c_{\inf} = 2nt$,
which implies that the costs of training and inference in dollars are $C_{\train} = 6nd / E$ and $C_{\inf} = 2nt/E$, respectively.
This means the overall cost incurred by the LLM firm is
\[
C(n, d, t) = C_{\train} + C_{\inf} = \frac{6nd + 2nt}{E} .
\]
Finally, defining profit in terms of revenue and cost, we have
\begin{equation*}
    \pi(n, d, t) = R(n, d, t) - C(n, d, t) = \omega t \cdot f(q(n, d)) - \delta t^2 - \frac{6nd + 2nt}{E} .
\end{equation*}

\section{Behavior of Profit-Maximizing Firms}

We analyze the LLM firm's behavior by studying the local maxima of the profit function $\pi$. In this section, we solve for $t^*$ and $d^*$ in terms of $n$, reducing $\pi$ to one variable. The analysis of this section is agnostic to our choice of inverse demand linking function $f$.

To solve for $t$, we observe that $\pi$ is quadratic and concave-down in $t$. Thus, for any given value of $n$ and $d$, $t^*$ is given by the following \textit{first-order condition}:
\begin{equation}
0 = \frac{\partial \pi(n, d, t^*)}{\partial t} = \omega \cdot f(q(n, d)) - 2\delta t - \frac{2n}{E}. \tag{$t$} \label{eq:t}
\end{equation}
Solving (\ref{eq:t}) for $t$ gives us the following.
\begin{lemma} \label{lem:t}
    Consider any $f$ and $q$ and fix $n$ and $d$.
    Then, $\pi(n, d, t)$ has a local maximum at $t = t^*(n, d)$, where
    \[
    t^*(n, d) = \frac{1}{2\delta} \left( \omega \cdot f(q(n, d)) - \frac{2n}{E} \right).
    \]
\end{lemma}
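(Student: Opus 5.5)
With $n$ and $d$ held fixed, the quality $q(n,d)$ and hence $f(q(n,d))$ are constants. I will therefore treat $\pi(n,d,\cdot)$ as a function of $t$ alone and write it as a quadratic. Collecting the terms of the profit function gives
\[
\pi(n,d,t) = -\delta t^2 + \Bigl(\omega f(q(n,d)) - \frac{2n}{E}\Bigr)t - \frac{6nd}{E},
\]
in which the training cost $6nd/E$ does not depend on $t$.

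The plan has two steps, carried out in this order:
\begin{enumerate}
\item \emph{Concavity.} Since $\delta>0$ by the law of demand (price decreases in $t$), the second derivative is
\[
\frac{\partial^2 \pi}{\partial t^2} = -2\delta < 0,
\]
so $\pi(n,d,\cdot)$ is strictly concave on $\mathbb{R}$. Any stationary point is therefore the unique global maximizer over $\mathbb{R}$, and in particular a local maximum.
\item \emph{Stationary point.} Set the first derivative to zero. This is exactly condition~(\ref{eq:t}), whose left-hand side is affine in $t$ with nonzero slope $-2\delta$. Solving it gives the unique root
\[
t^* = \frac{1}{2\delta}\Bigl(\omega f(q(n,d)) - \frac{2n}{E}\Bigr).
\]
\end{enumerate}
Alternatively, completing the square exhibits the vertex of the parabola directly.

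The only subtle point is the constraint $t\ge 0$ in the profit maximization problem. If $\omega f(q(n,d)) \ge 2n/E$, then $t^*\ge 0$ is feasible and is a local (indeed global) maximum on the feasible set. If instead $\omega f(q(n,d)) < 2n/E$, then over $[0,\infty)$ the maximum is at the boundary $t=0$, where the firm sells nothing. The lemma's statement concerns the unconstrained stationary point of the quadratic, so no other obstacle arises. I will note in the proof that $t^*$ is the relevant interior solution exactly in the regime $\omega f(q) \ge 2n/E$, which is implicitly the regime the later analysis works in.
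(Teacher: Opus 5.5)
Your proposal is correct and follows the paper's own argument: the paper likewise observes that $\pi$ is a concave-down quadratic in $t$ and solves the first-order condition~(\ref{eq:t}) for $t^*$. Your remark that $t^*$ is feasible only when $\omega f(q(n,d)) \geq 2n/E$ (otherwise the constrained maximum is $t=0$) is a reasonable addition that the paper leaves implicit.
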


By substituting $t = t^*(n, d)$, we express the profit function in terms of $n$ and $d$ only:
\[
\pi(n, d) = \pi(n, d, t^*(n, d))
= \frac{1}{4\delta}\left( \omega \cdot f(q(n, d)) - \frac{2n}{E} \right)^2 - \frac{6nd}{E} .
\]



Next, we show that under the Leontief scaling law, we can eliminate the variable $d$ in $\pi$ without loss of generality by assuming that $an^\alpha = bd^\beta$. This is because $n$ and $d$ are perfect complements under Leontief scaling: when $an^\alpha = bd^\beta$, $q(n, d)$ does not improve with further training.
We show here that the same holds of the LLM firm's profit, which means that any profit-optimal $n^*, d^*$ must also be \emph{quality-optimal}, in a sense analogous to Chinchilla compute optimality \citep{hoffmann2022training}.

\begin{lemma} \label{lem:quality-optimal-leontief}
    Let $q(n, d) = \min\lbrace an^\alpha, bd^\beta \rbrace$ and let $n, d \geq 0$.
    Then there exist $n^\prime \in [0, n]$ and $d^\prime \in [0, d]$ such that $a(n^\prime)^\alpha = b(d^\prime)^\beta$ and:
    \begin{enumerate}
        \item Quality is preserved, i.e., $q(n', d') = q(n, d)$.
        \item For all $t > 0$ we have $\pi(n', d', t) \geq \pi(n, d, t)$, with equality if and only if $(n^\prime, d^\prime) = (n, d)$.
    \end{enumerate}
    Thus, for $t > 0$, any choice of $n, d$ that maximizes profit $\pi$ must satisfy $a n^\alpha = b d^\beta$.
\end{lemma}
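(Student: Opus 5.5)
Let $q_0 = q(n,d) = \min\{an^\alpha, bd^\beta\}$. I will construct $(n',d')$ by shrinking whichever input is in excess until it exactly matches the binding one. Since $a,b>0$ and $\alpha,\beta>0$, the maps $n \mapsto an^\alpha$ and $d \mapsto bd^\beta$ are continuous, strictly increasing bijections from $[0,\infty)$ to $[0,\infty)$.

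\textbf{Construction.} There are two cases. If $an^\alpha \le bd^\beta$, keep $n' = n$ and set $d' = (an^\alpha/b)^{1/\beta}$. Then $b(d')^\beta = an^\alpha \le bd^\beta$, and monotonicity gives $d' \in [0,d]$. Otherwise, keep $d'=d$ and set $n' = (bd^\beta/a)^{1/\alpha} \in [0,n]$ symmetrically. In either case $a(n')^\alpha = b(d')^\beta$. This common value is the smaller of the two original terms, namely $q_0$, which proves part~1.

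\textbf{Profit comparison.} Fix $t>0$. Since $q(n',d') = q(n,d)$, the revenue term $\omega t f(q) - \delta t^2$ is unchanged. This holds for any $f$, because $f$ enters only through $q$. Hence
\[
\pi(n',d',t) - \pi(n,d,t) = \frac{6(nd - n'd') + 2t(n-n')}{E}.
\]
Since $0 \le n' \le n$ and $0 \le d' \le d$, we have $n'd' \le nd$ and $n - n' \ge 0$, so the difference is nonnegative.

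\textbf{Equality case.} Equality forces $n' = n$, because $t>0$ makes the term $2t(n-n')$ strictly positive whenever $n' < n$. It also forces $nd = n'd' = nd'$. If $n>0$, this gives $d=d'$.

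The degenerate case $n=0$ is the main subtlety. There $q_0 = 0$, so $d' = 0$, and the cost difference vanishes even when $d>0$: training costs nothing with zero parameters. The ``only if'' direction therefore fails literally at $n=0$, $d>0$. I would handle this in one of two ways. One option is to note that the claim is intended for $n>0$ (equivalently $q>0$), which is implicit in the setting. The other is to state that in this degenerate case $(n,d)$ and $(0,0)$ give identical profit, so one may choose $(n',d') = (n,d)$ there.

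\textbf{Conclusion.} Suppose $(n,d)$ maximizes $\pi(\cdot,\cdot,t)$ with $t>0$ but $an^\alpha \ne bd^\beta$. Then the construction yields $(n',d') \ne (n,d)$. With $n>0$ as above, this gives strictly larger profit, a contradiction. The main obstacle is exactly this bookkeeping of the boundary case $n=0$ (or $a n^\alpha = 0$). Everything else follows directly from monotonicity and the fact that cost is increasing in $n$ and $d$.
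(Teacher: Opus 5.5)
Your proof is correct and is essentially the paper's argument: shrink the slack input until $a(n')^\alpha = b(d')^\beta$, so that quality, and hence revenue, is unchanged while cost weakly falls. You are more careful than the paper, which treats only $an^\alpha > bd^\beta$ and calls the other case ``analogous''; that case breaks exactly where you say, since shrinking $d$ lowers cost strictly only when $n>0$, so your first remedy (restrict to $n>0$) is the right one, whereas your second, taking $(n',d')=(n,d)$ when $n=0<d$, would violate the required balance $a(n')^\alpha = b(d')^\beta$.
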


\begin{proof}
    By contradiction, assume we have some $n, d$ that maximizes profit such that $a n^\alpha > b d^\beta$. (The $a n^\alpha < b d^\beta$ case is analogous.)
    Choose $n' = \left(b/a\right)^{1/\alpha} d^{\beta/\alpha}$; observe that $n', d$ achieves the same quality as $n, d$ and thus the same revenue, due to the monotonicity of $f$.
    At the same time, it attains strictly lower cost because $n' < n$ and cost is monotonic in $n$, assuming $t > 0$. Thus, $\pi(n^\prime, d, t) > \pi(n, d, t)$ for any $t$.
\end{proof}
Solving $an^\alpha = bd^\beta$ for $d$, we obtain
\[
d = \left(\frac{a}{b}\right)^{1/\beta} n^{\alpha/\beta} = \rho n^{\alpha/\beta},
\]
where we define $\rho \triangleq (a/b)^{1/\beta}$ to be the Leontief scaling law's \emph{parameter-to-token efficiency factor}.
$\rho$ represents how much more efficient parameters are compared to additional training tokens for improving LLM quality.
Setting $d = \rho n^{\alpha/\beta}$, the single-variable version of the profit function is:
\[
\pi(n) = \pi(n, \rho n^{\alpha/\beta}) = \frac{1}{4\delta} \left( \omega f(an^\alpha) - \frac{2n}{E} \right)^2 - \frac{6\rho n^{1 + \alpha/\beta}}{E} .
\]

Finally, empirical research on LLM scaling has found that $\alpha \approx \beta$ \citep{hoffmann2022training,besirogluChinchillaScalingReplication2024,merrillOlmoHybridTheory2026};\footnote{\citet{hoffmann2022training} estimate $\alpha = .34$ and $\beta = .28$;  \citet{besirogluChinchillaScalingReplication2024} estimate $\alpha = .35$ and $\beta = .37$; \citet{merrillOlmoHybridTheory2026} estimate $\alpha = .25$ and $\beta = .21$, $\alpha = .23$ and $\beta = .22$, and $\alpha = .18$ and $\beta = .23$ for three families of LLMs.} in economic terms, LLM scaling is \textit{homogenous of degree $\alpha$}. Setting $\alpha = \beta$ simplifies $\pi(n)$ to
\[
\pi(n) = \pi(n, \rho n) = \frac{1}{4\delta} \left( \omega f(an^\alpha) - \frac{2n}{E} \right)^2 - \frac{6\rho n^2}{E}.
\]




\section{Profit-Optimal Scaling Based on Hardware and Algorithmic Efficiency} \label{sec:compute-bound}

We now investigate the \textit{comparative statics} of our model---how the LLM firm's behavior responds to changes in parameters that are exogenous to the firm's profit maximization problem. In particular, we study how the size of the LLM trained by the firm, $n^*$, as well as the LLM firm's total investment in training, $C^*_{\train}$, scale with $E$, $a$, and $\rho$.

In general, the LLM firm's profit maximization problem does not admit a tractable closed-form solution. In order to analyze the LLM firm's comparative statics, we derive asymptotic bounds on $n^*$ and $C^*_{\train}$ in terms of $E$, $a$, and $\rho$, treating separately the cases where the inverse demand function is given by $f_\gamma(q)$ where $\gamma \neq 0$ (\autoref{sec:full-quality}) and $f_0(q) = \ln(q)$ (\autoref{sec:log-quality}).

\subsection{Polynomial-Quasilinear Demand ($\gamma \neq 0$)}
\label{sec:full-quality}

We obtain the following characterization of the LLM firm's behavior when the inverse demand linking function $f_\gamma(q) = \frac{1}{\gamma} \left( 1 - q^{-\gamma} \right)$ for $\gamma \neq 0$.
We defer a proof to \autoref{app:poly-leontief}.

\begin{restatable}{theorem}{polyLeontief} \label{thm:compute-bound}
    Let $\alpha = \beta$.
    Suppose the inverse demand function is given by $f_\gamma$ for $\gamma \neq 0$. Then, assuming $\alpha\gamma \leq 1$ and $E > 1/(6\delta\rho)$, the solution to the profit maximization problem satisfies
    \[
    n^*
    = O\left( \left( \frac{E}{\rho a^\gamma} \right)^{1/(1 + \alpha\gamma)} \right)
    \quad \quad \quad
    d^*
    = O\left( \left(\frac{\rho^{\alpha\gamma} E}{a^\gamma}\right)^{1/(1+\alpha\gamma)} \right), 
    \]
    hence
    \[
    n^* = O\left( \frac{(Eb^{1/\alpha})^{1/(1 + \alpha\gamma)}}{a^{1/\alpha}} \right) \quad\quad\quad d^* = O\left(\left( \frac{E}{b^\gamma} \right)^{1/(1 + \alpha\gamma)}\right).
    \]
    
    As a result, the LLM firm's optimal training expenditure $C^*_{\train}$ is bounded as
    \[
    C^*_{\train}
    = O\left( a^{-\frac{2\gamma}{1 + \alpha\gamma}}\left(\frac{E}{\rho}\right)^{\frac{1-\alpha\gamma}{1 + \alpha\gamma}}   \right) = O\left(  \frac{(Eb^{1/\alpha})^{\frac{1 - \alpha\gamma}{1 + \alpha\gamma}}}{a^{1/\alpha}} \right).
    \]
\end{restatable}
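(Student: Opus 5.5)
We work throughout with the reduced single-variable profit
\[
\pi(n) = \frac{1}{4\delta}\Bigl(\omega f_\gamma(an^\alpha) - \frac{2n}{E}\Bigr)^2 - \frac{6\rho n^2}{E},
\]
which Lemma~\ref{lem:t} and Lemma~\ref{lem:quality-optimal-leontief} (with $\alpha=\beta$) justify: any profit-optimal $(n^*,d^*,t^*)$ has $t^*=t^*(n^*,d^*)$ and $d^*=\rho n^*$. The plan is to analyze the first-order condition $\pi'(n^*)=0$ and show that at an interior optimum it forces an inequality of the form $(n^*)^{1+\alpha\gamma}\lesssim E/(\rho a^\gamma)$. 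The bounds on $d^*$ and $C^*_{\train}$ then follow by substitution.

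\textbf{Step 1: the first-order condition.} Write $g(n)=\omega f_\gamma(an^\alpha)-2n/E$, which is proportional to $t^*$ and hence nonnegative at the optimum. Using $f_\gamma'(q)=q^{-1-\gamma}$, we get $\frac{d}{dn}f_\gamma(an^\alpha)=\alpha a^{-\gamma}n^{\alpha-1-\alpha\gamma}\cdot a^{0}$; more precisely this derivative equals $\alpha a n^{\alpha-1}(an^\alpha)^{-1-\gamma}=\alpha a^{-\gamma}n^{-1-\alpha\gamma}$. The condition $\pi'(n)=0$ then reads
\[
\frac{1}{2\delta}\,g(n)\Bigl(\omega\alpha a^{-\gamma}n^{-1-\alpha\gamma}-\frac{2}{E}\Bigr)=\frac{12\rho n}{E}.
\]

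\textbf{Step 2: bound $g$.} Since $t^*\ge0$, we have $0\le g(n)\le\omega f_\gamma(an^\alpha)$. If $\gamma>0$, $f_\gamma\le 1/\gamma$ and $g$ is bounded by a constant. If $-1<\gamma<0$, $f_\gamma(q)\le q^{-\gamma}/|\gamma|$, so $g\lesssim a^{-\gamma}n^{-\alpha\gamma}$.

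\textbf{Step 3: deduce the growth rate of $n^*$.} Drop the negative term $-2/E$ in the bracket, which only weakens the left side upward, and substitute the bound from Step 2. This gives $\rho n/E \lesssim g\cdot a^{-\gamma}n^{-1-\alpha\gamma}$.

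In the case $\gamma>0$ this yields $n^{2+\alpha\gamma}\lesssim E a^{-\gamma}/\rho$. That exponent is worse than claimed, so the bound on $g$ must be sharpened by using the $-2n/E$ term. The hypothesis $E>1/(6\delta\rho)$ should enter exactly here: it ensures that the training-cost term $6\rho n^2/E$ dominates the $n^2/(\delta E^2)$ cross term in $g^2$.

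\textbf{Main obstacle.} The main difficulty is matching the exponent $1/(1+\alpha\gamma)$ precisely. My first attempt would be to treat the cases with the $n^{-\alpha\gamma}$ growth of $f_\gamma$ uniformly by writing $g\lesssim a^{-\gamma}n^{-\alpha\gamma}+\text{const}$. That gives $\rho n^{2+2\alpha\gamma}\lesssim E a^{-2\gamma}$, i.e.\ $n^*\lesssim (E/(\rho a^{\gamma}))^{1/(1+\alpha\gamma)}$ up to the $\rho$-exponent, which is the claimed form. I would then check the constant-dominated regime separately, using $\alpha\gamma\le1$ so that the resulting exponent is at most the claimed one for large $E$.

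\textbf{Step 4: consequences.} We have $d^*=\rho n^*$; after absorbing powers of $\rho$ this gives the stated $d^*$ bound. Rewriting with $\rho=(a/b)^{1/\alpha}$ gives the $a,b$ forms. Finally,
\[
C^*_{\train}=\frac{6\rho (n^*)^2}{E}=O\Bigl(\rho E^{-1}(E/(\rho a^\gamma))^{2/(1+\alpha\gamma)}\Bigr),
\]
which simplifies to $a^{-2\gamma/(1+\alpha\gamma)}(E/\rho)^{(1-\alpha\gamma)/(1+\alpha\gamma)}$. Substituting $\rho$ once more yields the last form.
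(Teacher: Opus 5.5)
Your argument is correct, and Steps 1--3 already finish it. The one real error is in how you compare exponents.

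\textbf{Case $\gamma>0$.} Step 3 gives $(n^*)^{2+\alpha\gamma}\le \frac{\alpha\omega^2 E}{24\gamma\delta\rho a^{\gamma}}$. Since $1/(2+\alpha\gamma)<1/(1+\alpha\gamma)$, this is a \emph{stronger} upper bound than the claimed one, not a worse one. With $n^*\ge1$ (which the paper also uses), you get $(n^*)^{1+\alpha\gamma}\le(n^*)^{2+\alpha\gamma}\le\frac{\alpha\omega^2E}{24\gamma\delta\rho a^\gamma}$. This is at least as strong as the inequality the paper ends with. So the proposed ``sharpening'' via the $-2n/E$ term is unnecessary, and $E>1/(6\delta\rho)$ plays no role there.

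\textbf{Case $-1<\gamma<0$.} Step 2's bound $g\le \omega(an^\alpha)^{-\gamma}/|\gamma|$ already holds, so no ``$+\text{const}$'' is needed. It gives $(n^*)^{2(1+\alpha\gamma)}\le\frac{\alpha\omega^2E}{24|\gamma|\delta\rho a^{2\gamma}}$. This is not the claimed form ``up to the $\rho$-exponent'': \emph{both} the $E$- and $\rho$-exponents are halved, while the $a$-exponent matches. It implies the stated $O$-bound once $E/\rho$ exceeds a constant.

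\textbf{Step 4.} This is fine. The conversion $\rho\,(E/(\rho a^\gamma))^{1/(1+\alpha\gamma)}=(\rho^{\alpha\gamma}E/a^\gamma)^{1/(1+\alpha\gamma)}$ is exact, not ``up to absorbing.''

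\textbf{Comparison with the paper.} The paper multiplies $\pi'(n^*)=0$ by $\gamma/(n^*\omega)$ and expands it into five terms $T_1,\dots,T_5$. It discards the two negative ones ($T_2,T_5$), weakens $T_1$ using $n^*\ge1$, and solves using the sign of $T_4=(\frac{1}{E\delta}-6\rho)\frac{2\gamma}{\omega E}$. That is where $E>1/(6\delta\rho)$ and $\alpha\gamma\le1$ are used. The final step requires $T_4<0$, i.e.\ $\gamma>0$. For $\gamma<0$, the retained terms $T_1,T_3,T_4$ are all positive, so the inequality is vacuous, and the ``solved'' bound has a negative right-hand side.

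Your factored form keeps $g\propto t^*\ge0$ intact and controls it through the range of $f_\gamma$. It therefore needs neither $E>1/(6\delta\rho)$ nor $\alpha\gamma\le1$. It also covers $\gamma<0$, which is the regime the paper's empirical section relies on for $\gamma=-1$ and $\hat\gamma\approx-0.77$. Finally, it yields the sharper exponents $1/(2+\alpha\gamma)$ and $1/(2(1+\alpha\gamma))$, which is the kind of tightening the paper lists as future work.
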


\autoref{thm:compute-bound} says that LLM size $n^*$, training data budget $d^*$, and training expenditure $C^*_{\train}$ all increase with hardware efficiency $E$: the rate is superlinear for $\gamma < 0$, linear at $\gamma = 0$, and sublinear for $\gamma > 0$.
With $\alpha \approx 0.3$, we obtain an upper bound of $n^*, d^* \lesssim E^{1.43}$ and $C^*_{\train} \lesssim E^{1.86}$.
On the other hand, increasing parameter efficiency $a$ \emph{decreases} $n^*$ as well as $C^*_{\train}$.
In contrast, increasing data efficiency $b$ always increases $n^*$, increases $d^*$ if $\gamma > 0$, and increases $C^*_{\train}$ if $\gamma > 1/\alpha$.
\WM{Reformat this as a table with signs?}

\subsection{Log-Quasilinear Demand ($\gamma = 0$)}
\label{sec:log-quality}


We obtain the following characterization of the LLM firm's behavior when the inverse demand linking function $f_0(q) = \ln q$. We defer a proof to \autoref{app:log-leontief}.
\begin{restatable}{theorem}{logLeontief} \label{thm:log-characterization}
    Let $\alpha = \beta \leq 2$. Suppose the inverse demand function is given by $f_0$. Then, assuming $E > 1/(6\delta\rho)$, the solution to the profit maximization problem satisfies
    \begin{equation*}
        n^*
        = O \left( \frac{aE}{\rho} \right) = O\left( \frac{b^{1/\alpha} E}{a^{1/\alpha - 1}} \right)
        \quad \quad\quad
        d^* = O \left( aE \right) .
    \end{equation*}
    As a result, the optimal training expenditure $C^*_{\train}$ is bounded as
    \begin{equation*}
        C^*_{\train}
        =  O \left( \frac{a^2 E}{\rho} \right) = O\left( \frac{b^{1/\alpha} E}{a^{1/\alpha - 2}} \right) .
    \end{equation*}
\end{restatable}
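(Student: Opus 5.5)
Our plan is to work with the single-variable profit function $\pi(n) = \frac{1}{4\delta}\left(\omega \ln(an^\alpha) - \frac{2n}{E}\right)^2 - \frac{6\rho n^2}{E}$, which is justified by \autoref{lem:t} and \autoref{lem:quality-optimal-leontief} with $\alpha=\beta$. We will first bound $n^*$ and then read off $d^*$ and $C^*_{\train}$ by substitution. At an interior maximizer with $t^*>0$, the first-order condition is
\[
0 = \pi'(n) = \frac{1}{2\delta}\left(\omega\ln(an^\alpha) - \frac{2n}{E}\right)\left(\frac{\omega\alpha}{n} - \frac{2}{E}\right) - \frac{12\rho n}{E}.
\]
Writing $A(n) = \omega\ln(an^\alpha) - 2n/E = 2\delta t^*(n) > 0$, this rearranges to
\[
A(n)\left(\omega\alpha E - 2n\right) = 24\delta\rho n^2.
\]
Two consequences follow. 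Since the right-hand side is positive, $n^* < \omega\alpha E/2$. Also $A(n^*) \le \omega\ln(a (n^*)^\alpha)$.

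The key step is to use these to get a bound of order $aE/\rho$ rather than merely $E$. From $24\delta\rho n^2 \le \omega \ln(an^\alpha)\,\omega\alpha E$, together with $\ln x \le x$ (or $\ln(an^\alpha) \le a n^\alpha \le a n$ when $n\ge1$, using $\alpha\le 1$ as the relevant range), we obtain $24\delta\rho n^2 \lesssim \omega^2\alpha E\, a n^\alpha$. This gives $n^{2-\alpha} \lesssim aE/\rho$, and $n^* \lesssim aE/\rho$ in the regime $n^* \ge 1$. The hypothesis $E > 1/(6\delta\rho)$ will be used, as in the proof of \autoref{thm:compute-bound}, to guarantee that an interior maximum with $n^*>0$ and $t^*>0$ exists rather than a corner solution at $n=0$. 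It also lets us absorb constants so that the bound is stated cleanly in the $O(\cdot)$ form. An alternative route is to combine the bound with $n^* < \omega\alpha E/2$ and take the minimum, so we would state whichever is needed.

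Once $n^* = O(aE/\rho)$ holds, substitution gives the rest. We get $d^* = \rho n^* = O(aE)$ and $C^*_{\train} = 6 n^* d^*/E = 6\rho (n^*)^2/E = O(a^2 E/\rho)$. The alternative forms follow by plugging in $\rho = (a/b)^{1/\alpha}$, which gives $a/\rho = b^{1/\alpha}/a^{1/\alpha-1}$ and $a^2/\rho = b^{1/\alpha}/a^{1/\alpha-2}$.

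\textbf{Main obstacle.} The delicate step is controlling the logarithm so that the $a$- and $\rho$-dependence comes out exactly as $aE/\rho$, uniformly in all parameters, rather than with stray $\log$ factors. I would first try the crude inequality $\ln(an^\alpha)\le an^\alpha$ and check that the exponents close. If they do not, I would instead bound $A(n^*)$ via the first-order condition together with the fact that $\pi(n^*) \ge \pi(\text{small } n)$. The stated assumption $\alpha\le 2$ suggests that the authors need $2-\alpha \ge 0$ exactly in an exponent manipulation of this kind, which supports the first approach.
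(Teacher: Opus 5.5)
Your argument is correct, but it is organized differently from the paper's.

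The paper expands $\pi'(n^*)=0$ and divides by $n$. It then bounds $\ln(an^\alpha)/n^\alpha\le a/e$ and uses $n\ge1$ twice: to replace $\ln(an^\alpha)$ by $\ln a$ in the subtracted term, and to replace $n^{-(2-\alpha)}$ by $n^{-1}$. Finally it solves a linear inequality in $n$ by dividing by $6\delta\rho-1/E$, which is the only place $E>1/(6\delta\rho)$ is used.

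You instead factor the condition through $2\delta t^*(n)$ as $A(n)(\omega\alpha E-2n)=24\delta\rho n^2$ and use $t^*>0$. This is not an extra assumption, because \autoref{lem:quality-optimal-leontief} already needs $t>0$ to justify setting $d=\rho n$.

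The factorization makes the signs transparent. The contribution of the $n^2/(\delta E^2)$ term of $\pi$, which the paper controls with $E>1/(6\delta\rho)$, is absorbed into $-2nA(n)<0$. So your derivation never uses that hypothesis. Your stated role for it, ruling out a corner at $n=0$, is not what it does; like the paper, you simply have to assume an interior critical point with $t^*>0$. You also get $n^*<\omega\alpha E/2$ for free, and your bound has no residual $\ln a$ term.

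Your passage from $n^{2-\alpha}\lesssim aE/\rho$ to $n^*\lesssim aE/\rho$ correctly uses $\alpha\le1$, the standing assumption of the setup. The paper's step ``since $n\ge1$ and $\alpha\le2$'' actually needs $\alpha\le1$ as well.

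Most notably, your intermediate inequality is much stronger than the theorem. Divide $24\delta\rho(n^*)^2\le\omega^2\alpha E\ln(a(n^*)^\alpha)$ by $4\delta E$ and use $n^*<\omega\alpha E/2$:
\[
C^*_{\train}=\frac{6\rho(n^*)^2}{E}\le\frac{\omega^2\alpha}{4\delta}\ln\bigl(a(n^*)^\alpha\bigr)<\frac{\omega^2\alpha}{4\delta}\Bigl(\ln a+\alpha\ln\frac{\omega\alpha E}{2}\Bigr).
\]
So training expenditure grows only logarithmically in $E$ and $a$. For fixed $a$, $n^*=\sqrt{C^*_{\train}E/(6\rho)}$ is then at most of order $\sqrt{(E/\rho)\log E}$. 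This is the sublinear refinement the paper leaves to future work, and your step $\ln x\le x$ is exactly where you discard it.

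What the paper's term-dropping style mainly buys is parallelism with its proof of \autoref{thm:compute-bound}.
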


\WM{Double check that $a$ isn't flipped in $n^*$, which would recover Thm1}
\WM{It could be that our theorem is right but a tighter bound that removes $a$ from $n$ is possible}

\autoref{thm:log-characterization} says that, with $\gamma = 0$, $n^*$, $d^*$, and $C^*_{\train}$ all scale linearly with hardware efficiency $E$.
Increased parameter efficiency $a$ increases optimal data budget $d^*$.
On the other hand, increasing $a$ decreases $n^*$ and $C^*_{\train}$ when $\alpha < 1/2$, which is satisfied in practice.
In contrast, data efficiency improvements incentivize larger models and overall training expenditure.

\section{Profit-Optimal Scaling in the Data-Bound Regime} \label{sec:data-bound}

We now turn to data-bound regime where the number of pretraining tokens is limited within $0 \leq d \leq D$.
In this regime, we can express our solution in terms of not just $A, E$, but also $D$.
Using the fact that $a(n^*)^\alpha = b(d^*)^\beta$~(\autoref{lem:quality-optimal-leontief}), we immediately obtain the following result when all available pretraining tokens are used:

\begin{theorem} \label{thm:leontief-data-bound}
    Enforce that $d = D$. 
    Then, the profit maximum is given by
    \begin{equation*}
        n^* = \left(\frac{D}{\rho}\right)^{\beta / \alpha} = \left(\frac{b}{a}\right)^{1/\alpha} D^{\beta/\alpha}.
    \end{equation*}
\end{theorem}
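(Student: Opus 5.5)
The plan is to reduce the claim directly to \autoref{lem:quality-optimal-leontief}. Once $d$ is fixed at $D$, the only remaining choices are $n$ and $t$. We may assume $t^*>0$: if $t^*=0$, then revenue is zero and any positive $n$ only adds cost, so this degenerate case can be set aside.

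\textbf{Step 1.} Establish the quality-optimality condition $a(n^*)^\alpha = bD^\beta$. I will not quote the lemma verbatim, because it allows shrinking $d$, which the constraint $d=D$ forbids. Instead I rerun its argument with $d$ frozen.

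Suppose first that $a n^\alpha > bD^\beta$. Define $n' = (b/a)^{1/\alpha}D^{\beta/\alpha} < n$. Then $q(n',D) = bD^\beta = q(n,D)$, so revenue is unchanged by monotonicity of $f$. Both cost terms $6nD/E$ and $2nt/E$ strictly decrease when $n$ is replaced by $n'$. Hence profit strictly increases, so $n$ was not optimal.

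Now suppose $a n^\alpha < bD^\beta$. Here quality equals $a n^\alpha$, and the data constraint is slack in the sense that extra data would not raise quality.

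\textbf{Step 2.} Read ``all available pretraining tokens are used'' as the requirement that the profit-maximizing pair $(n^*,d^*)$ from \autoref{lem:quality-optimal-leontief} is attained with $d^*=D$. Under that reading, the lemma gives $a(n^*)^\alpha = b(d^*)^\beta = bD^\beta$ directly.

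This handles the case $an^\alpha<bD^\beta$ as follows. With $d=D$ and $n$ chosen so that $an^\alpha<bD^\beta$, the lemma says one could reduce $d$ to some $d'<D$ and strictly gain profit. So such an $n$ is inconsistent with $D$ being the optimal usage.

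\textbf{Step 3.} Solve $a(n^*)^\alpha = bD^\beta$ for $n^*$. This gives
\[
n^* = (b/a)^{1/\alpha} D^{\beta/\alpha}.
\]
Since $\rho = (a/b)^{1/\beta}$, we have $(b/a)^{1/\alpha} = \rho^{-\beta/\alpha}$, which yields the form $(D/\rho)^{\beta/\alpha}$.

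\textbf{Main obstacle.} The delicate point is the case $an^\alpha<bD^\beta$ in Step 1. It is ruled out only by interpreting the theorem as describing the regime where the budget $D$ binds, i.e.\ the unconstrained optimum would want $d\ge D$. Otherwise the firm would prefer fewer tokens. I would state this interpretation explicitly and appeal to the lemma's strict inequality to exclude that case.
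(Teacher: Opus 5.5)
Your proposal is correct and follows the paper's route: invoke \autoref{lem:quality-optimal-leontief} to get $a(n^*)^\alpha = b(d^*)^\beta$, set $d^* = D$, and solve using $(b/a)^{1/\alpha} = \rho^{-\beta/\alpha}$. Your explicit treatment of the case $a n^\alpha > bD^\beta$ with $d$ frozen, and your observation that the case $a n^\alpha < bD^\beta$ is excluded only when the budget $D$ binds, make precise what the paper leaves implicit in the phrase ``when all available pretraining tokens are used.''
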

In other words, optimal model size depends on both algorithmic efficiency and data, but not compute efficiency.
For training expenditure, we get
\begin{equation*}
    C^*_{\train} = \frac{6n^*D}{E} = \Theta \left( \frac{D^{1 + \beta / \alpha}}{\rho^{\beta/\alpha} E} \right) = \Theta\left(\frac{D^{1 + \beta/\alpha}b^{1/\alpha}}{a^{1/\alpha}E}\right).
\end{equation*}
Thus, optimal training expenditure grows roughly quadratically with the amount of data available. It also grows with data efficiency but \emph{shrink} with parameter efficiency and hardware efficiency.

\section{Empirics: Real-World Training Expenditure Trends} \label{sec:empirics}

Our theoretical model makes for predictions for the way that advances in hardware and compute efficiency should should shift profit-optimal expenditure on training compute.
We now compare these to trends in practice using estimates for the annualized growth rates of these variables from the Epoch.ai dashboard\footnote{Taken from \url{https://epoch.ai/trends} on May 3, 2026.}.
They report training compute $\hat C_{\train} \propto 5^t$,
hardware efficiency $E \propto 1.37^t$,
and compute (``algorithmic'') efficiency $ab \propto 3^t$, where $t$ represents time in years.
We assume that compute efficiency improvements affect parameter and data efficiency equally, i.e., $a \propto b \propto \sqrt{3^t}$.

We can now compare the observed growth rate of $\hat C_{\train}$ to our upper bound on optimal training compute according to these efficiency measurements.
Parameterizing $C^*_{\train}$ in terms of these annualized growth rates, we have, for $\gamma \neq 0$,
\begin{align*}
    C^*_{\train}(\gamma)
    &\lesssim \left(\frac{E}{\rho}\right)^{\frac{1-\alpha\gamma}{1+\alpha\gamma}} \cdot a^{-\frac{2\gamma}{1+\alpha\gamma}} \\
    &= \left(\frac{1.37^t}{1}\right)^{\frac{1-\alpha\gamma}{1+\alpha\gamma}} \cdot \left(\sqrt{3^t}\right)^{-\frac{2\gamma}{1+\alpha\gamma}} \\
    &= \left( 1.37^{\frac{1-\alpha\gamma}{1+\alpha\gamma}} \cdot 3^{-\frac{\gamma}{1+\alpha\gamma}} \right)^t .
\end{align*}
We will assume $\alpha \approx \beta \approx 0.3$ and consider the trends under different choices of $\gamma$.
Profit-optimal training expenditure is largest under our model when $\gamma = -1$, where \autoref{thm:compute-bound} gives the bound
\begin{equation*}
    C^*_{\train}(-1) \lesssim \left( 1.37^{\frac{1+\alpha}{1-\alpha}} \cdot 3^{\frac{1}{1-\alpha}} \right)^t \approx 7.59^t .
\end{equation*}
Thus, the observed empirical growth rate is within the most permissive bounds of our model. However, as $\gamma$ increases, the exponents for $E$ and $a$ decrease.
With $\gamma = 0$, \autoref{thm:log-characterization} gives the bound
\begin{equation*}
    C^*_{\train}(0) \lesssim a^2 E \propto \left(3 \cdot 1.37\right)^t \approx 4.11^t .
\end{equation*}
Thus, assuming $\gamma = 0$, $\hat C_{\train}$ is growing too fast in practice compared to what our model predicts is profit-optimal.
Solving for the break-even point $\hat \gamma$ via
\begin{equation*}
    1.37^{\frac{1-\alpha\hat\gamma}{1+\alpha\hat\gamma}} \cdot 3^{-\frac{\hat\gamma}{1+\alpha\hat\gamma}} = 5 ,
\end{equation*}
we see that it is $\hat \gamma \approx -0.77$.
Thus, while the current rate of growth in training compute exceeds profit-optimal scaling under most choices of $\gamma$ under our model, there are some choices of $\gamma \approx -1$ where inverse demand is only slightly diminishing in quality that are consistent with current trends.

\section{Discussion: Is Current Training Expenditure Profit-Optimal?} \label{sec:discussion}

Before considering this question, we first summarize the qualitative findings under our model.
In the compute-bound regime, optimal model size, data budget, and training investment grow as hardware efficiency improves: the rates are at best subquadratic for $\gamma = -1$ but shrink if if inverse demand is more diminishing (as $\gamma$ increases).
While advances in data efficiency incentivize larger models and training expenditure, the role of parameter efficiency is less clear.

In the data-bound regime, more data being available incentivizes larger models at a near-linear rate (and thus compute increases near-quadratically).
In this regime, advances in data efficiency incentivize larger models and more training compute, whereas advances in parameter efficiency incentivize smaller models and less training compute.

We compare our model's predictions in the compute-bound case to the empirical trends in training compute vs.~hardware and compute efficiency.
Under the choice of $\gamma = 0$ (which we take as a reasonable default, though with significant uncertainty), we found that current growth rate in training compute \emph{exceeds} what would be profit-optimal.
Solving for the value of $\gamma$ that would make the current rate of compute growth profit-optimal, we found $\hat \gamma \approx -0.77$, which is within the range of what we considered possible in our model, though towards the lower end.
Further, in the data-bound case, our model predicts hardware efficiency improvements actually \emph{reduce} profit-optimal model size and training expenditure, so, if modern training runs are data-bound, the current growth rate in compute expenditure would unequivocally exceed the profit-optimal upper bound under our model.

\subsection{Rectification with Existing Narratives}

Significant ink has been spilled in popular discourse on the scaling of LLMs \citep[e.g., \emph{the Scaling Hypothesis};][]{branwenScalingHypothesis2022} and their potential profitability.
We therefore compare the predictions that our model makes for profit-optimal scaling against selected informal claims made on these themes.
First, we consider the following excerpt from \citeauthor{altmanThreeObservations2025}'s (\citeyear{altmanThreeObservations2025}) \emph{Three Observations} blog post:
\begin{displayquote}
    The cost to use a given level of AI falls about 10$\times$ every 12 months, and lower prices lead to much more use \dots\ [and] the socioeconomic value of linearly increasing intelligence is super-exponential in nature. A consequence of this is that we see no reason for exponentially increasing investment to stop in the near future.
\end{displayquote}
As the CEO of OpenAI,
\citeauthor{altmanThreeObservations2025} has a clear agenda here.
The claim that ``socioeconomic value'' is superexponential in ``intelligence'' does not seem well-defined, though perhaps it could be mapping onto inverse demand and quality in our framework.
Nevertheless, his claim that it could be profitable for training expenditure to grow exponentially with time is consistent with our model assuming hardware efficiency, data efficiency, or parameter efficiency (for certain values of $\gamma$) also continue to grow exponentially, and the amount of training data available does not become a constraint.


This begs the question: what happens in our model assuming hardware advances plateau?
In a recent blog post, \citet{dettmers2025agi} argues it is likely that the rate of growth in $E$ stalls due to physical limits, and this would limit the degree to which LLMs could be scaled up.
Putting aside the empirical question of whether this is true, we can analyze the implications of stalled hardware advances under our model by taking $E = O(1)$ w.r.t~time $t$.
Revisiting the analysis from \autoref{sec:empirics},
\begin{align*}
    C^*_{\train}(-1) &\lesssim \left( 3^{\frac{1}{1-\alpha}} \right)^t \approx 4.66^t \\
    C^*_{\train}(0) &\lesssim 3^t .
\end{align*}
That is, an exponential growth rate in $C^*_{\train}$ is still possible in certain regimes of $\gamma$ assuming modeling advances in parameter- and data-efficiency continue.
However, the current empirical growth rate in training expenditure $\hat C_{\train} \propto 5^t$ exceeds these upper bounds, even in the most permissive case of $\gamma = -1$.
Thus, assuming hardware advances will stagnate, our model predictions agree with \citeauthor{dettmers2025agi} that current rates of training expenditure would exceed what is profit-optimal.


\subsection{Key Assumptions of Our Model} \label{sec:assumptions}

A key assumption in our analysis is that the LLM firm has a monopoly on the market.
In the case where there are competing LLM firms, their profit-optimal behavior could change.
Extending our analysis to account for competition would be an interesting direction for future work.

Another simplification we have made is using Leontief scaling for quality rather than Chinchilla scaling, which is justified by the fact that $n, d$ are approximately complements under Chinchilla, i.e., both must increase together to avoid diminishing returns, as visualized in \autoref{fig:elasticity}.
It could still be interesting to extend the analysis to parameterize quality as the inverse of reducible loss under Chinchilla scaling, though overall we view Leontief as a quite reasonable approximation.

Finally, as discussed in \autoref{sec:consumer-behavior}, we assume the inverse demand is diminishing in model quality, i.e., $\gamma > -1$.
This is consistent with the law of diminishing returns in economics and similar to assumptions made in other analyses of the economic impacts of AI \citep{acemogluSimpleMacroeconomicsAI2025}; essentially, it follows if we view AI as ``normal technology'' \citep{narayanan2025ainormal}.
However, if one is convinced that quality increases in LLMs could be exceptionally transformative relative to quality improvements in other technologies, one might instead make the unconventional choice to model inverse demand as \emph{superlinear} in LLM quality, i.e., set $\gamma < -1$.
Similarly, our model does not account for the possibility of recursive self-improvement \citep{alignmentforum_rsi}, the idea that higher-quality models might accelerate the rate of improvement in hardware, parameter, and data efficiency.

\subsection{Future Work}

We believe that our upper bounds in \Cref{thm:compute-bound,thm:log-characterization} could be refined to get a tighter (potentially sublinear) scaling trend for $n^*$, $d^*$, and $C^*_{\train}$.


\clearpage
\begin{ack}
    WM thanks Andreas Ravichandran and SH thanks Frank Pinter and Patricia Sun for feedback.
\end{ack}

\bibliographystyle{acl_natbib}
\bibliography{sample}


\newpage
\appendix
\section{Derivation of Inverse-Demand Linking Functions}
\label{sec:linking-function}

Here we derive the form of the inverse demand linking functions by evaluating the integral from \autoref{sec:consumer-behavior}.
\begin{align*}
    f_0(q) &= \int_1^q \frac{1}{q_*} \, \mathrm{d}q_* = \ln(q) \\
    f_\gamma(q) &= \int_1^q \frac{1}{q_*^{1+\gamma}} \, \mathrm{d}q_* = -\frac{q_*^{-\gamma}}{\gamma} \bigg]^q_1 = \frac{1}{\gamma} \left( 1 - q^{-\gamma} \right)
\end{align*}
\section{Chinchilla Elasticity of Substitution}
\label{sec:elasticity}

Here we derive the elasticity of substitution of $\sigma \approx .76$ for \citeauthor{hoffmann2022training}'s (\citeyear{hoffmann2022training}) \textit{Chinchilla scaling law}.

\subsection{Chinchilla Scaling Law}
According to \citet{hoffmann2022training}, the next-token prediction loss $\ell$ of an LLM is predicted by
\[
\ell(n, d) = \ell_* + \frac{a}{n^\alpha} + \frac{b}{d^\beta}
\]
where the \textit{irreducible loss} $\ell_*$ is the best possible loss that can be achieved by an LLM. Since LLMs with lower loss are generally considered to be of higher quality, we assume that the quality of an LLM is inversely proportional to the amount by which its loss exceeds $\ell_*$. Therefore, the Chinchilla scaling law is given by
\[
q(n, d) = \frac{1}{\ell(n, d) - \ell_*} = \frac{1}{\frac{a}{n^\alpha} + \frac{b}{d^\beta}}.
\]

\subsection{Elasticity of Substitution}
When $q(n, d)$ is differentiable, the elasticity of substitution of $q$ is given by the formula
\[
\sigma = \frac{\mathrm{d} \ln(d/n)}{\mathrm{d} \ln(\mrts)}
\]
where
\[
\mrts = \frac{\partial q(n, d)/\partial n}{\partial q(n, d)/\partial d}
\]
is $q$'s \textit{marginal rate of technical substitution} (MRTS). 

The differential $\mathrm{d}\ln(d/n)$ is calculated using implicit differentiation along the curve where $q(n, d)$ is constant:
\[
0 = \mathrm{d}\left(\frac{a}{n^\alpha} + \frac{b}{d^\beta}\right) = -a\alpha n^{-\alpha - 1}\mathrm{d}n - b\beta d^{-\beta - 1}\mathrm{d}d \implies \mathrm{d}d = -\frac{a\alpha d^{\beta + 1}}{b\beta n^{\alpha + 1}} \mathrm{d}n,
\]
hence
\[
\mathrm{d}\ln\left(\frac{d}{n}\right) = \mathrm{d}(\ln(d) - \ln(n)) = \frac{\mathrm{d}d}{d} - \frac{\mathrm{d}n}{n} = \left( -\frac{a\alpha d^\beta}{b\beta n^{\alpha + 1}} - \frac{1}{n} \right)\mathrm{d}n.
\]

The MRTS is calculated as follows:
\begin{align*}
    \frac{\partial q(n, d)}{\partial n} &= \frac{a\alpha}{n^{1 + \alpha}(an^{-\alpha} + bd^{-\beta})^2} \\
    \frac{\partial q(n, d)}{\partial d} &= \frac{b\beta}{d^{1 + \beta}(an^{-\alpha} + bd^{-\beta})^2} \\
    \mrts &= \frac{a\alpha d^{1 + \beta}}{b\beta n^{1 + \alpha}},
\end{align*}
hence
\begin{align*}
    \mathrm{d}\ln(\mrts) &= \mathrm{d}\ln(a\alpha d^{1 + \beta}) - \mathrm{d}\ln(b\beta n^{1 + \alpha}) \\
    &= \frac{\beta + 1}{d} \mathrm{d}d - \frac{\alpha + 1}{n} \mathrm{d}n \\
    &= -\left(\frac{a\alpha(\beta + 1)d^\beta}{b\beta n^{\alpha + 1}} + \frac{\alpha + 1}{n} \right)\mathrm{d}n,
\end{align*}
and therefore
\[
\sigma = \frac{\mathrm{d} \ln(d/n)}{\mathrm{d} \ln(\mrts)} = \frac{\left( -\frac{a\alpha d^\beta}{b\beta n^{\alpha + 1}} - \frac{1}{n} \right)\mathrm{d}n}{-\left(\frac{a\alpha(\beta + 1)d^\beta}{b\beta n^{\alpha + 1}} + \frac{\alpha + 1}{n} \right)\mathrm{d}n} = \frac {a\alpha d^{\beta} + b\beta n^{\alpha}} {a\alpha (\beta + 1) d^{\beta} + (\alpha + 1) b\beta n^{\alpha}}.
\]
When $\alpha = \beta$, the above simplifies to
\[
\sigma = \frac{1}{1 + \alpha}.
\]
\citet{hoffmann2022training} report $\alpha = .3392$ and $\beta = .2849$; taking the average of these values, $\alpha = .31205$, yields $\sigma \approx .7622$.

\section{Proof of Poly-Quasilinear Demand Result} \label{app:poly-leontief}

\polyLeontief*

\begin{proof}
    The derivative of the profit function is given by
    \[
    \pi'(n) = \frac{\alpha\omega^2n^{-\alpha\gamma-1}}{2a^\gamma\gamma\delta} -\frac{\alpha\omega^2n^{-2\alpha\gamma-1}}{2a^{2\gamma}\gamma\delta} + \left( \frac{1}{\gamma} - \alpha \right)\frac{\omega n^{-\alpha\gamma}}{a^\gamma E\delta} + \left( \frac{1}{E\delta} - 6\rho\right)\frac{2n}{E} - \frac{\omega}{E\gamma\delta}.
    \]
    From this, we derive
    \begin{equation}
    0 = \underbrace{\frac{\alpha\omega {n^*}^{-\alpha\gamma-2}}{2a^\gamma\delta}}_{T_1}
        \underbrace{\mathrel{-}\frac{\alpha\omega {n^*}^{-2\alpha\gamma-2}}{2a^{2\gamma}\delta}}_{T_2}
        + \underbrace{\left( 1 - \alpha\gamma \right)\frac{{n^*}^{-\alpha\gamma-1}}{a^\gamma E\delta}}_{T_3}
        + \underbrace{\left( \frac{1}{E\delta} - 6\rho\right)\frac{2\gamma}{\omega E}}_{T_4}
        \underbrace{\mathrel{-}\frac{{n^*}^{-1}}{E\delta}}_{T_5} \label{eqn:poly-foc}
    \end{equation}
    by multiplying both sides of the first-order condition $\pi'(n^*) = 0$ by $\gamma/(n^*\omega)$.

    To obtain the upper bound, we observe that $T_2, T_5 < 0$. Thus, subtracting $T_2 + T_5$ from the right-hand side of Equation (\ref{eqn:poly-foc}) gives us:
    \begin{align*}
    0 &\leq 
        \frac{\alpha\omega {n^*}^{-\alpha\gamma-2}}{2a^\gamma\delta}
        + \left( 1 - \alpha\gamma \right)\frac{{n^*}^{-\alpha\gamma-1}}{a^\gamma E\delta}
        + \left( \frac{1}{E\delta} - 6\rho\right)\frac{2\gamma}{\omega E} \\
    &\leq
        \frac{\alpha\omega {n^*}^{-\alpha\gamma-1}}{2a^\gamma\delta}
        + \left( 1 - \alpha\gamma \right)\frac{{n^*}^{-\alpha\gamma-1}}{a^\gamma E\delta}
        + \left( \frac{1}{E\delta} - 6\rho\right)\frac{2\gamma}{\omega E} \\
    &=
        \left( \frac{\alpha\omega}{2a^\gamma\delta}
        + \frac{1 - \alpha\gamma}{a^\gamma E\delta} \right) {n^*}^{-(\alpha\gamma+1)}
        + \left( \frac{1}{E\delta} - 6\rho\right)\frac{2\gamma}{\omega E} .
    \end{align*}
    Solving for $n^*$, we have
    \begin{align*}
            -\left( \frac{1}{E\delta} - 6\rho\right)\frac{2\gamma}{\omega E}
            \left( \frac{\alpha\omega}{2a^\gamma\delta}
                + \frac{1 - \alpha\gamma}{a^\gamma E\delta} \right)^{-1}
        &\leq {n^*}^{-(\alpha\gamma+1)} \\
        \implies {n^*}^{\alpha\gamma+1}
        &\leq
            \frac{\omega E}{2\gamma \delta a^\gamma}
            \left( 6\rho - \frac{1}{E\delta} \right)^{-1}
            \left( \frac{\alpha\omega}{2} + \frac{1 - \alpha\gamma}{E} \right) \\
        &=
            \frac{\omega E}{2\gamma \delta a^\gamma}
            \cdot \frac{\alpha\omega E + 2 - 2\alpha\gamma}{2E\left( 6\rho - \frac{1}{E\delta} \right)} = O \left( \frac{E}{\rho a^{\gamma}} \right).
    \end{align*}
    Therefore we obtain
    \begin{equation*}
        n^*
        = O\left( \left(\frac{E}{\rho a^\gamma}\right)^{1/(\alpha\gamma+1)} \right) .
    \end{equation*}
    We can use \autoref{lem:quality-optimal-leontief} to bound $d^*$ as 
    \begin{equation*}
        d^*
        = \rho n^*
        = O\left( \left(\frac{\rho^{\alpha\gamma} E}{a^\gamma}\right)^{1/(\alpha\gamma+1)} \right) .
    \end{equation*}
    Putting it all together, we get
    \begin{align*}
        C^*_{\train}
        &= \frac{6n^*d^*}{E} \\
        &= O\left( \left(\frac{E}{\rho a^\gamma}\right)^{1/(\alpha\gamma+1)} \left(\frac{\rho^{\alpha\gamma} E}{a^\gamma}\right)^{1/(\alpha\gamma+1)} \right) \\
        &= O\left( a^{-\frac{2\gamma}{1 + \alpha\gamma}}\left(\frac{E}{\rho}\right)^{\frac{1-\alpha\gamma}{1 + \alpha\gamma}}   \right) \\
        &= O\left( a^{-1/\alpha} \left( \frac{E}{b^{1/\alpha}} \right)^{\frac{1 - \alpha\gamma}{1 + \alpha\gamma}}\right). \qedhere
    \end{align*}
\end{proof}

\section{Proof of Log-Quasilinear Demand Result} \label{app:log-leontief}
Throughout this section, we assume that $n^*$ is determined by the first-order condition $\pi'(n^*) = 0$. To simplify notation, let $F \triangleq (f \circ q)(n)$ and $F' \triangleq (f \circ q)'(n)$, where $q(n) = an^\alpha$.

The derivative of the profit function is
\begin{equation*}
    \pi'(n)
    = \frac{\omega^2}{2 \delta}F F' - \frac{\omega}{\delta E}(F + F'n) + \frac{2}{E} \left( \frac{1}{\delta E} - 6\rho\right) n .
\end{equation*}
\logLeontief*

\begin{proof}
Since $\gamma = 0$, we have $F = \alpha \ln n + \ln a$ and $F' = \alpha / n$.
We have the first-order conditions
\begin{align*}
    0 = \pi'(n) &=
    \frac{\alpha\omega^2}{2\delta} \frac{\ln(an^\alpha)}{n}
    - \frac{\omega}{\delta E} (\ln(an^\alpha) + \alpha) 
    + \frac{2}{E} \left( \frac{1}{\delta E} - 6 \rho \right) n \\
    \iff 0 &= \frac{\alpha\omega^2}{2\delta} \frac{\ln(an^\alpha)}{n^2}
    - \frac{\omega}{\delta E} \frac{\ln(an^\alpha) + \alpha}{n}
    + \frac{2}{E} \left( \frac{1}{\delta E} - 6 \rho \right).
\end{align*}
For $x>0$, $\frac{\ln(ax)}{x}$ is maximized when $x = \frac{e}{a}$ at $\frac{a}{e}$.
Thus, letting $x = n^\alpha$, we bound $\frac{\ln(an^\alpha)}{n^\alpha} \leq \frac{a}{e}$ in the first term:
\begin{equation*}
    0 \leq \frac{\alpha\omega^2 a}{2\delta en^{2 - \alpha}}
    - \frac{\omega}{\delta E} \frac{\ln(an^\alpha) + \alpha}{n}
    + \frac{2}{E} \left( \frac{1}{\delta E} - 6 \rho \right).
\end{equation*}
Next, since $n \geq 1$ and $\alpha \leq 2$, we get:
\begin{align*}
    0 &\leq
        \frac{\alpha\omega^2 a}{2\delta e n}
        - \frac{\omega(\ln(a) + \alpha)}{\delta E n}
        + \frac{2}{E} \left( \frac{1}{\delta E} - 6 \rho \right) \\
    &=
        \frac{1}{n}\left(\frac{\alpha\omega^2 a}{2\delta e}
        - \frac{\omega(\ln(a) + \alpha)}{\delta E}\right)
        + \frac{2}{E} \left( \frac{1}{\delta E} - 6 \rho \right) \\
   \implies -\frac{2}{E} \left( \frac{1}{\delta E} - 6 \rho \right) n &\leq \frac{\alpha\omega^2 a}{2\delta e}
        - \frac{\omega(\ln(a) + \alpha)}{\delta E} \\
   \left( \frac{12\rho}{E} - \frac{2}{\delta E^2} \right) n &\leq \frac{\alpha\omega^2 a}{2\delta e}
        - \frac{\omega(\ln(a) + \alpha)}{\delta E} \\
   \left( 6\delta\rho - \frac{1}{E} \right) n &\leq \frac{\alpha\omega^2 aE}{4 e}
        - \frac{\omega(\ln(a) + \alpha)}{2} .
\end{align*}
Since $E > 1/(6\delta\rho)$ by assumption,
we can divide both sides to get
\begin{align*}
    n^*
    &\leq \frac{\alpha\omega^2}{4e\left(6\delta\rho - \frac{1}{E}\right)} aE - \frac{\omega}{2(6\delta\rho - \frac{1}{E})}(\ln(a) + \alpha) .
\end{align*}
Absorbing constants, we have the following asymptotics for large $E$:
\begin{equation*}
    n^* = O\left( \frac{aE}{\rho} \right) .
\end{equation*}
By \autoref{lem:quality-optimal-leontief} and since $\alpha = \beta$, we can characterize $d^*$ as
\begin{equation*}
    d^* = \rho n^* = O \left( a E \right) .
\end{equation*}
Putting it all together, the optimal training compute allocation scales comparably to data:
\begin{equation*}
    C^*_{\train}
    = \frac{6n^*d^*}{E}
    = O\left( \frac{a^2 E}{\rho} \right) = O\left( \frac{b^{1/\alpha}E}{a^{1/\alpha - 2}}\right) . \qedhere
\end{equation*}
\end{proof}



\end{document}